\newcommand{\argmin}{\operatornamewithlimits{argmin}}
\icmltitlerunning{Overcoming Catastrophic Forgetting by Bayesian Generative Regularization}
\begin{document}

\twocolumn[
\icmltitle{Overcoming Catastrophic Forgetting by Bayesian Generative Regularization}



\icmlsetsymbol{equal}{*}

\begin{icmlauthorlist}
\icmlauthor{Patrick H. Chen}{ucla}
\icmlauthor{Wei Wei}{cloud}
\icmlauthor{Cho-jui Hsieh}{ucla}
\icmlauthor{Bo Dai}{brain}
\end{icmlauthorlist}

\icmlaffiliation{ucla}{Department of Computer Science, UCLA, California, USA}
\icmlaffiliation{cloud}{Google Cloud, Sunnyvale, California, USA}
\icmlaffiliation{brain}{Google Brain, Mountain View, California, USA}

\icmlcorrespondingauthor{Patrick H. Chen}{patrickchen@g.ucla.edu}

\icmlkeywords{Machine Learning, ICML}

\vskip 0.3in
]



\printAffiliationsAndNotice{}  

\begin{abstract}
The streaming update of Bayesian posterior calculation provides us a natural way for continual learning. However, the na\"{i}ve mean-field posterior parametrization for variational approximation is inappropiate in neural network, and thus, lock the full ability for preventing catastrophic forgetting. 
To resolve this issue, we introduce a generative regularization for all given classification models, which is implemented by leveraging energy-based models with contrastive loss, to obtain the sufficient features for valid decomposition in posterior approxiamtion.
By combining discriminative and generative loss together, we empirically show that the proposed method outperforms state-of-the-art methods on a variety of tasks, avoiding catastrophic forgetting in continual learning. In particular, the proposed method outperforms baseline methods over 15$\%$ on the Fashion-MNIST dataset and 10$\%$ on the CUB dataset.
\end{abstract}

\vspace{-3mm}
\section{Introduction}
\label{section:introduction}
\vspace{-2mm}

Many real-world machine learning applications require classification models to learn a sequence of tasks in an incremental way. For each task, learning system could only access part of whole data and the previously seen data can not be assessed. For example, previous customer data usually can not be accessed due to increasingly more strict data regulations on the user privacy, such as GDPR \cite{voigt2017eu}. The labelled data of an existing task can be depleted when new tasks emerge~\cite{sutton2014online,kirkpatrick2017overcoming}.
Thus, an intelligent agent for continual learning must not only adapt to newly incoming tasks but also perform well on entire set of all the existing tasks in an incremental way that avoids revisiting all previous data at each stage.
Previous studies~\cite{mccloskey1989catastrophic,ratcliff1990connectionist} found that conventional deep learning models fail to tackle continual learning with the phenomenon of \textbf{catastrophic forgetting}, where deep neural networks tend to lose the information of previous tasks (i.e. classification accuracy drops significantly) after a new task is introduced.

Apparently, in order to achieve continual learning, catastrophic forgetting is an important issue to be addressed. A common strategy is to fix parameters used in the previous tasks. When new tasks arrive, based on different criteria, each method can reuse part of the fixed parameters \cite{DBLP:journals/corr/FernandoBBZHRPW17}, expand some model components \cite{DBLP:journals/corr/RusuRDSKKPH16,yoon2018lifelong}, or search for the best new model architecture to process new task \cite{DBLP:journals/corr/abs-1904-00310}. Alternatively, instead of fixing a model, memory-based methods store a subset of previous data and constrain the update of models by leveraging the distilled knowledge from previous tasks~\cite{castro2018end,hou2018lifelong,javed2018revisiting,li2017learning,rebuffi2017icarl,shin2017continual}. These methods demonstrate the capability of alleviating the forgetting in practice on several datasets, however, without investigating and explaining the potential cause of catastrophic forgetting. More importantly, 
model adaption methods come at the cost that the model size expands correspondingly to the number of new tasks; while keeping data directly violates GDPR regulation. These drawbacks make existing methods not applicable for large-scale real-world applications.
Therefore, there is a need to investigate the cause of catastrophic forgetting for a principled algorithm under the memoryless, fixed model setup. 

Most of the existing literature 
views the incremental training as a moving path in parameter space, then the catastrophic forgetting happens when the update direction obtained based on partial data leads an inappropriate solution. Therefore, it is natural to design the search and update directions in training to avoid the catastrophic forgetting~\citep{kirkpatrick2017overcoming,v.2018variational,Zenke:2017:CLT:3305890.3306093,smola2003laplace}. 
Variational Continual Learning (VCL)~\citep{v.2018variational}, as a representative algorithm, exploits the equivalent streaming update form of Bayesian posterior calculation, which by nature only uses part of data, and therefore can combat forgetting. In practice, the exact posterior is intractable, especially in the Bayesian neural network, then, variational methods are used for approximation.
The VCL achieves good empirical performance on various benchmarks. However, VCL approxiamtes the posterior distribution by assuming parameters shared by all tasks to be \emph{independent} of all task-specific parameters, which is difficult to satisfy, especially in neural network, as we will illustrate in~Figure~\ref{fig:problem}. Moreover, our experimental results demonstrate that discrminative VCL models tend to extract features from limited parts of an object, which is only useful particular current task, instead of diverse features from all different parts. Since the classifier is built on concentrated features, independence assumption in VCL is prone to errors as training in the subsequent tasks might make the model attend to other features which are not considered in the earlier tasks.
These drawbacks of VCL motivate us to have a valid posterior approximation while encouraging models to focus on more diverse features. 

Fortunately, we can fullfil these two desiderata by equipping the model with data generative regularization in the training process. 
The generative regularization is pushing the model to catch the characteristics of all parts of the object for generatation, so that the shared component will be sufficient features and stable across all tasks. Meanwhile, with the sufficient features, we can recover the independent condition as we discusse in Section~\ref{section:baye}. 
Our contributions can be summarised as follows: 
\begin{itemize}[leftmargin=*, noitemsep,topsep=0pt,parsep=0pt,partopsep=0pt]
	\item  we analyze Bayesian approach in the continual learning setup and point out a deficiency of the parameter independence assumption; 
	\item  we propose to use energy-based model with Langevin dynamic sampling as an implicit regularization term in training discriminative task;
	\item  we empirically show that the proposed variational learning with generative regularization works well on all benchmark datasets we consider.
\end{itemize}

\begin{figure}[t]
\vspace{-2mm}
\centering
    \includegraphics[width=0.8\linewidth]{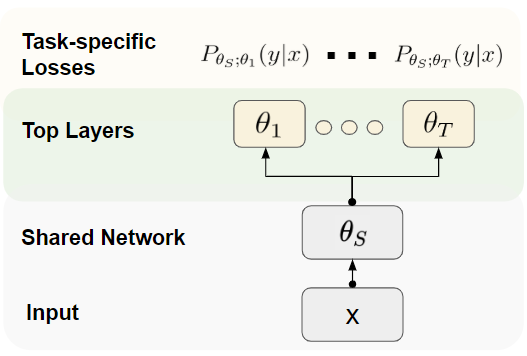}
    \vspace{-3mm}
    \caption{Illustration of the problem and model setup. }
    \label{fig:problem}
    \vspace{-2mm}
\end{figure}

\section{Related Work}\label{section:relatedwork}

\noindent \textbf{Continual learning by regularization.} There are a rich body of methods solving catastrophic forgetting problem by introducing different regularizations. EWC~\citep{kirkpatrick2017overcoming} aims to minimize the change of weights that are important to previous tasks through the estimation of diagonal empirical fisher information matrix. SI \cite{Zenke:2017:CLT:3305890.3306093} proposes
to alleviate catastrophic forgetting by allowing individual
synapse to estimate their importance for solving learned
tasks, then penalizing changes on the important weights. IMM \cite{DBLP:journals/corr/LeeKHZ17} trains individual models on each task and then carries out a second stage of training to
combine them. VCL \cite{v.2018variational} takes a Bayesian point of view to model a sequential learning procedure. This line of research assumes a memoryless (i.e., no stored old-data) and fixed model (i.e., model architecture cannot be adjusted during training) setup to study inherent causes of catastrophic forgetting. Our work falls in this line of research, but is derived in a principled way, and we mainly compare our algorithm with state-of-the-art methods in this category.

\noindent \textbf{Continual learning by model adaption.} Another class of methods addresses the continual learning problem by allowing the model to expand its capacity, while keeping the parameters used to solve previous tasks fixed. 
PathNet \cite{DBLP:journals/corr/FernandoBBZHRPW17} selects paths between predefined modules, and tuning is allowed only when an unused module is selected. Dynamically expandable networks (DEN) \cite{yoon2018lifelong} selects whether to expand or duplicate layers based on certain criteria for an incoming new task. Similarly, \citet{DBLP:journals/corr/RusuRDSKKPH16} tries to leverage the strategies adopted in progressive networks to heal forgetting. Following this line of research, \cite{DBLP:journals/corr/abs-1904-00310} proposed to solve the continual learning by explicitly taking into account
continual structure optimization via differentiable neural
architecture search. Our main goal is to study the catastrophic forgetting problem given the constraint that the structure of underlying model is fixed, while this category is out of our consideration. 

\noindent \textbf{Memory-based approaches and generative models.} Previous works also try to alleviate catastrophic forgetting by introducing memory systems which store previous data and replay the stored old examples with the new data \cite{farquhar2019unifying,li2018supportnet,lopez2017gradient,rebuffi2017icarl,robins1995catastrophic}. Specifically, these approaches require to keep either a coreset data or a generative model to replay previous tasks in order to leverage the distilled knowledge from previous tasks \cite{castro2018end,hou2018lifelong,javed2018revisiting,li2017learning,rebuffi2017icarl,shin2017continual,wu2019large}. In practice, these methods alleviate the forgetting phenomena if enough old-data recorded, but it will increase the data usage
Since our method is related to generative models, we will also compare to one of the representative algorithms, variational generative replay (VGR) \cite{farquhar2019unifying}.


\noindent {\bf Energy-based model.} Our method is partly based on applying energy-based models~(EBMs).  We refer readers to \cite{lecun2006tutorial} for a more comprehensive review. The primary difficulty in training EBMs comes from estimation of the partition function. Our work follows the derivation in \cite{dai2019exponential}. We notice that some concurrent works have also pointed out the importance of generative capability in the training process \cite{du2019implicit,grathwohl2019your}, the motivation behind these works differ from us and their focus is not in overcoming catastrophic forgetting. Furthermore, empirical results showed that using only EBMs could not achieve the best performance. The proposed integration of Bayesian framework and generative capability significantly outperforms EBM alone.

\vspace{-2mm}
\section{Methods}

In this section, we first clarify the problem setting in~Section~\ref{section:problem}. Then, we analyze the drawback of the  posterior approximation used in the original VCL~\citep{v.2018variational}, which motivates the generalization regularizer in Section~\ref{section:baye}. After providing a brief introduction to EBM in~Section~\ref{sec:ebm}, we design the generation-regularized Bayesian EBM to combat catastrophic forgetting in Section~\ref{sec:gen_reg}. 

\vspace{-2mm}
\subsection{Problem Statement}
\label{section:problem}

A given classification model $M$, with a set of parameters denoted as $\theta$, consists of parameters shared across all tasks $\theta_S$ and parameters dedicated to specific tasks $\theta_t$. Sequential tasks are denoted as $D_1,D_2,\dots,D_T$, where each $D_t=(X_t,Y_t)$  defines a classification task with observations $X_t$ and labels $Y_t$. In the canonical setup \cite{kirkpatrick2017overcoming,v.2018variational}, for each task $t$, only one dataset $D_t$ can be used and all previous datasets $D_1,\dots,D_{t-1}$ cannot be accessed. The goal of our work is to achieve good classification accuracy on each task after observing all $T$ tasks. In addition, we do not allow the algorithm to change the pre-defined structure of the model $M$ or introduce additional parameters in shared networks. An illustration of the problem formulation is shown in Figure \ref{fig:problem}.


\vspace{-2mm}
\subsection{Motivations}
\label{section:baye}
We first explain why Bayesian method is a good candidate to resolve the forgetting problem, and then point out what lacks in existing literature, which motives the EBM view with generalization regularization. 

Following \cite{v.2018variational}, we assume some prior distribution of model parameters $p_{0}(\theta)$ (e.g., $p_{0}(\theta)$ follows normal distribution). According to Bayes' rule, the posterior distribution after observing $T$ datasets can be written as: 
\resizebox{1\linewidth}{!}{
  \begin{minipage}{\linewidth}
\vspace{-2mm}
\begin{align*}
    p(\theta|D_{1:T}) &\propto p(\theta) \prod_{t=1}^{T} p(D_t|\theta) \propto \bigg( p(\theta) \prod_{t=1}^{T-1} p(D_t|\theta) \bigg) p(D_T|\theta) \\
    &= \bigg( p(\theta) p(D_{1:T-1}|\theta) \bigg) p(D_T|\theta) \\
    &\propto p(\theta | D_{1:T-1}) p(D_T|\theta).
\end{align*}
  \end{minipage}}
Therefore, we can see that if we have a good posterior approximation of previous tasks $p(\theta | D_{1:T-1})$, by Bayesian approach we can combine $p(\theta | D_{1:T-1})$  and likelihood of the current task $p(D_T|\theta)$ to obtain the posterior of model parameters $p(\theta|D_{1:T})$ that work well for all tasks. The above decomposition paves a natural way for Bayesian method to handle the continual learning setup.  
In general the posterior is intractable, however, we can approximate the true posterior $p(\theta|D_{1:t})$ of each task $t$ by KL-divergence via variational inference, such that $\forall t = 1,2,\ldots,T$, 
\begin{align*}
q_{t}(\theta) = \argmin_{q \in \bQ} KL \big( q(\theta) \| \frac{1}{Z_t}q_{t-1}(\theta)  p(D_t|\theta)\big), 
\end{align*}
where $q_{t}(\theta)$ and $q_{t-1}(\theta)$ are the approximated posterior up to timestamp $t-1$ and $t$, $\bQ$ is a predefined approximate posteriors set and $Z_t$ is a normalization constant which needs not to be computed. We then apply variational method to estimate the lower bound of $P(y | \theta,x)$ and arrive the following training loss for each task $t$:
\vspace{-2mm}
\begin{align}\label{eq:VCLloss}
	\resizebox{0.9\columnwidth}{!}
	{
	$
	\sum_{n=1}^{B_t} \EE_{\theta \sim q_{t}(\theta)} [ -\log p(y_{t,n}|\theta,x_{t,n})] + KL\big(q_{t}(\theta)\|q_{t-1}(\theta)\big),
	$
	}
\end{align}
where $B_t$ denotes the dataset size of task $t$. One can parametrize $p(y_{t,n}|\theta,x_{t,n})$ with Gaussian distribution and softmax upon the output of neural network, for regression and classification, respectively. The parameter of posterior can be trained end-to-end via reparametrization trick~\citep{kingma2014stochastic}. 

Despite that Bayesian method looks promising, we need to point out one important deficit 
in VCL~\cite{v.2018variational}. VCL assumes the shared model parameters $\theta_{S}$ are \emph{independent} of the individual head network $\theta_{t}$ and thus the posterior function $ p(\theta|D_{1:t})$ of the task $t$ could be decomposed into:
\begin{equation}
\label{eq:vcl}
p(\theta|D_{1:t}) =  p(\theta_{t}|D_{1:t}) p(\theta_{S}|D_{1:t}),
\end{equation}
where $\theta = \{\theta_S,\theta_t \}$. VCL then applies Bayeisan approach on approximating $p(\theta_{S}|D_{1:t})$ and fix $\theta_{t}$ after training each task t. However, the independence assumption between $\theta_t$ and $\theta_S$ is not true in general, especially in neural network where the head parameter $\theta_t$ highly depends on the shared layers. The correct factorization of posterior function should be
\begin{equation}
\label{eq:correct}
p(\theta|D_{1:t}) =  p(\theta_{t}|D_{1:t};\theta_{S}) p(\theta_{S}|D_{1:t}),
\end{equation}
where the dependence between $\theta_S$ and $\theta_t$ exists. Thus, in order to correctly apply Bayesian framework, we may use a sufficient feature for $\theta_S$ such that the equation \eqref{eq:vcl} becomes a valid reduction. 

We provide a sufficient condition under which the decomposition~\eqref{eq:vcl} is valid. 
\begin{proposition}\label{thm:equivalence}
If the causal joint distribution is
\[
p(\theta_t, \theta_S, D_{1:t}) = p(\theta_S)p(D_{1:t}|\theta_S)p(\theta_t|D_{1:t}),
\] 
we have the $\theta_S\perp \theta_t|D_{1:t}$ in posterior. 
Therefore, the decomposition~\eqref{eq:vcl} becomes valid. 
\end{proposition}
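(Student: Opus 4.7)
The proof is essentially a direct consequence of the fact that the hypothesized factorization encodes the Markov chain $\theta_S \to D_{1:t} \to \theta_t$, and in such a chain the endpoints are conditionally independent given the middle variable. The plan is to verify this via a short Bayes'-rule calculation on the stated joint, with no appeal to external d-separation machinery.

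First, I would divide the hypothesized joint by the data marginal $p(D_{1:t})$ to obtain
\begin{equation*}
p(\theta_S, \theta_t \mid D_{1:t}) \;=\; \frac{p(\theta_S)\,p(D_{1:t}\mid \theta_S)}{p(D_{1:t})} \cdot p(\theta_t \mid D_{1:t}).
\end{equation*}
Next, I would recognize the prefactor as $p(\theta_S \mid D_{1:t})$. This requires a brief consistency check: marginalizing the hypothesized joint over $\theta_t$ and using that $p(\theta_t \mid D_{1:t})$ integrates to one yields $p(\theta_S, D_{1:t}) = p(\theta_S)\,p(D_{1:t}\mid\theta_S)$, whence $p(\theta_S \mid D_{1:t}) = p(\theta_S)\,p(D_{1:t}\mid\theta_S)/p(D_{1:t})$ by Bayes' rule. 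A symmetric marginalization over $\theta_S$ confirms that the factor labelled $p(\theta_t \mid D_{1:t})$ in the hypothesis indeed coincides with the true conditional under the joint.

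Combining the two identifications gives $p(\theta_S, \theta_t \mid D_{1:t}) = p(\theta_S \mid D_{1:t})\,p(\theta_t \mid D_{1:t})$, which is precisely $\theta_S \perp \theta_t \mid D_{1:t}$, and hence the VCL-style decomposition~\eqref{eq:vcl} becomes valid.

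Main obstacle: there is not really one; the claim is essentially a restatement of a standard Markov-chain conditional independence. The only subtlety worth pausing on is to not treat the symbolic factor $p(\theta_t \mid D_{1:t})$ appearing in the hypothesis as a mere label divorced from the joint — the marginalization step above is what justifies identifying it with the actual posterior, and is where the full force of the causal factorization enters. Everything else is one line of Bayes' rule.
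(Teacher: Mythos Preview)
Your proposal is correct and follows essentially the same one-line Bayes'-rule argument as the paper: divide the assumed joint by $p(D_{1:t})$ and recognize the resulting factors as $p(\theta_S\mid D_{1:t})$ and $p(\theta_t\mid D_{1:t})$. The only difference is that you spell out the marginalization consistency checks that the paper leaves implicit, which is harmless extra care.
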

\begin{proof}
The conclusion can be verified straightforwardly. 
\begin{eqnarray*}
q(\theta_S, \theta_t|D_{1:t}) &=& \frac{p(\theta_S)p(D_{1:t}|\theta_S)p(\theta_t|D_{1:t})}{p(D_{1:t})}\\
&=& p(\theta_S|D_{1:t})p(\theta_t|D_{1:t}),
\end{eqnarray*}
from which we obtain the conditional independence property $\theta_S\perp \theta_t|D_{1:t}$. 
\end{proof}
The condition in~Proposition~\ref{thm:equivalence} inspires us the generative requirement on the intermediate layer of the model (i.e., $\theta_{S}$) to revalid the decomposition~\eqref{eq:vcl}. Bringing generative power ((i.e., $p(D_{1:t}\theta_S)$) into the play could exploit variational Bayesian inference overcome forgetting better. 

The remaining question is 
\begin{center}
 \emph{how do we equip the underlying model with generative power without adding more number of parameters? }
\end{center}
We answer this question by resorting to the Energy-based model~(EBM). Essentially, the neural network $p(y|x,\theta)$ can be understood as a EBM, which automatically has the discriminative and generative ability, although in most of the training for supervised tasks, the generative ability is just simply ignored. 
From this perspective, we will complement the existing discriminative loss in the training with an additional generative loss term to ensure the causal condition in~\eqref{thm:equivalence}, and eventually guarantee the decomposition in Bayesian inference is valid. Therefore, it totally release the power of Bayesian inference to overcome catastrophic forgetting. In the following sections, we illustrate how generative power of EBM can be fit into the Bayesian method.




\begin{figure}[t!]
\centering
    \includegraphics[width=1\linewidth]{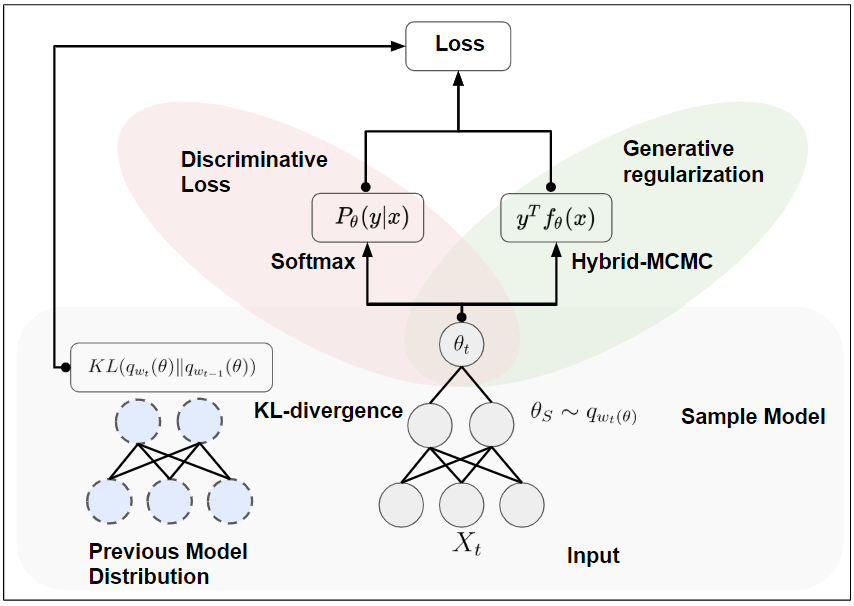}
    \caption{Illustration of the proposed method. }
    \label{fig:ourmethod}
\end{figure}

\subsection{Energy-based Model}\label{sec:ebm}


For any given discriminative model $f_{\theta}(x)$  (e.g., deep neural networks for classification tasks) parameterized by $\theta$ as 
\[
p(y|x) = \frac{\exp(y^\top f_\theta(x))}{Z_x}
\] 
with $Z_x(\theta) = \sum_{y\in \mathcal{Y}} \exp(y^\top f_\theta(x))$, it can be view as EBM with energy function $y^\top f_\theta(x)$. Then, obviously, by redefine the partition function, we obtain the joint distribution with generative ability:
\vspace{-3mm}
\begin{equation}
    p_{\theta}(x,y) = \frac{\exp( y^\top f_{\theta}(x))}{Z(\theta)},
\end{equation}
where $Z(\theta) = \sum_y{ \int \exp(y^\top f_{\theta}(x))dx }$. In this work, $f_{\theta}(\cdot)$ is a neural network parameterized by $\theta$. We can train the joint EBM by maximum likelihood estimation:
\vspace{-2mm}
\begin{align}
    \max_{\theta} p_{\theta}(x,y) &= \max_{\theta} \log p_{\theta}(x,y)\nonumber \\
    &= \max_{\theta} y^\top f_{\theta}(x) - \log Z(\theta). 
\end{align}
However, directly solving MLE of general EBMs is intractable due to the $\log$-partition function $\log Z(\theta)$. To alleviate the computation, Contrastive Divergence (CD) is proposed in \cite{hinton2002training}. CD estimates the gradient of the MLE of EBM as:
\begin{equation}\label{eq:CD}
	\resizebox{0.95\columnwidth}{!}
	{
	$
	  \nabla_\theta \log p_\theta(x,y) = \EE_D \big[ y^\top \nabla_\theta f_{\theta}(x) \big] - \EE_{p_{\theta}(x,y)} \big[ y^\top \nabla_\theta f_{\theta}(x) \big],
    $
    }
\end{equation}
where $p_{\theta}(x,y)$ denotes the underlying distribution from EBM. The second term $\EE_{p_{\theta}(x,y)} \big[ y^\top \nabla_\theta f_{\theta}(x) \big] $ can be calculated as firstly sample (batch of) data $x_t,y_t$ by using Langevin dynamic sampling shown in Algorithm \ref{alg:langevin}, and then calculate the $y_{t}^\top f(x_t)$ to stochastically get the estimated value of $\EE_{p_{\theta}(x,y)} \big[ y^\top \nabla_\theta f_{\theta}(x) \big] $. 


\begin{algorithm}[!t]
   \caption{Gibbs-Langevin Dynamic Sampling}
   \label{alg:langevin}
\begin{algorithmic}
   \STATE {\bfseries Input:} Buffer $B$, storing previous sampled data
   \STATE {\bfseries Output:} Sampled data $x_S,y_S$ and buffer B

  \STATE  $x_0 \sim B$ \\
  \FOR{$s=1$ {\bfseries to} $S$}
  \STATE $y_s \sim p(y | x_s)$  
  \STATE $x_s = x_{s-1} + \frac{1}{2}\eta_{s} \nabla_{x} [ y_s^\top f(x_{s-1})] $ + $\epsilon$, \\ $\epsilon \sim N(0,\eta_{s})$ 
  \STATE $\eta_s = \frac{1}{s}$
   \ENDFOR
  \STATE Add $x_S,y_S$ into $B$. 
  \STATE Return $x_S,y_S,B$.
\end{algorithmic}
\end{algorithm}

\begin{algorithm}[t!]
   \caption{Algorithm of Bayesian Generative Regularization (BGR) at task $t$.}
   \label{alg:overall}
	\begin{algorithmic}
   \STATE {\bfseries Input:} Dataset of task $t$ $D_{t}$, Posterior distribution of previous tasks  $q_{t-1}(\theta)$, Number of training epochs $E$ and learning rate $\beta$
   \STATE {\bfseries Output:} Posterior distribution $q_{t}(\theta)$ of learned model \\
     $q_{t}(\theta) = q_{t-1}(\theta)$

    \FOR{$epoch=1$  {\bfseries to}  $E$}
    \STATE $x_{b},y_{b} \sim D_{t}$ 
    \STATE $\theta \sim q_{t}(\theta)$ 
    \STATE Generate sample $x_{t}, y_{t}$ by Algorithm \ref{alg:langevin} 
    \STATE Calculate gradient $\nabla_\theta L(\theta; p_\theta)$ via Theorem \ref{thm:gradient_est}. 
    \STATE $q_{t}(\theta)$ = $q_{t}(\theta)$ - $\beta \nabla_\theta L(\theta; p_\theta)$  
     \ENDFOR
    \STATE  Return $q_{t}(\theta)$
\end{algorithmic}
\end{algorithm}

\subsection{Bayesian Inference as Learning with Generative Regularization}\label{sec:gen_reg}
With the formulation of generative loss, instead of interpreting $p(D_t|\theta)$ as a discriminative model $p(y_t|x_t;\theta)$ in eq \eqref{eq:VCLloss}, we have $p(D_t|\theta)$ to be a generative model as $p_\theta(x,y)$. Now, instead of lower-bounding $P(y | \theta,x)$, we estimate the lower bound of $P(x,y,\theta)$ and the core training objective of task $t$ in variational method changes from  eq \eqref{eq:VCLloss} into :
\resizebox{1\linewidth}{!}{
  \begin{minipage}{\linewidth}
\begin{align}
  \min_{q_{t} \in \bQ} \EE_{q_t,D_t} \big[ - \log p_\theta(x,y) + KL \big(q_{t}(\theta | D_{1:t})  \| q_{t-1}(\theta | D_{1:t-1}) \big)\big]&, \nonumber
\end{align}
  \end{minipage}
}

where $\bQ$ is the functional space of posterior distribution. For simplicity, we follow the literature to assume $\bQ$ to represent mean-field distribution, and we  generate a model parameter $\theta$ by sampling it from $q_{t}$. 
Recall that $p(x, y) = p(y|x)p(x)$, thus we can rewrite the objective as 
\begin{multline}\label{eq:totalObj}
        \min_{q_{t} \in \bQ} \EE_{q_t,D_t} \big[ -(1 - \lambda) \log p_\theta(y, x) -\lambda \log p_\theta(y | x) \\
        - \lambda\log p_{\theta}(x)+ KL(q_{t}(\theta | D_{1:t})  \| q_{t-1}(\theta | D_{1:t-1})) \big],
\end{multline}
where the $\log p_{\theta}(y|x)$ can be understood as the common discriminative loss, while both $\log p_{\theta}(x)$ and $\log p_{\theta}(x,y)$ can be understood as generative regularizations that match the empirical joint distribution and marginal distribution simultaneously. 
Then, we apply Contrastive Divergence to optimize~\eqref{eq:totalObj}, which providing the estimation of gradient of $p_{\theta}(x,y)$. In fact, new objective is also related to contrastive loss~\cite{chen2020simple,dai2019exponential}, where we use the synthesis samples as negative samples. Here we give a derivation of unbiased gradient estimator of $\log p_{\theta}(x)$ in the following theorem. 
\begin{theorem}
\label{thm:gradoflogp}
Given a discriminative model $f_{\theta}(x)$, the unbiased gradient estimator of the  corresponding Energy-based model term $\log p_{\theta}(x)$  is given by the following estimator:
\resizebox{1\linewidth}{!}{
  \begin{minipage}{\linewidth}
  
\begin{align*}
\nabla_{\theta} \log p_{\theta}(x) = 
      \EE_{p_{\theta}(y|x)}[y^\top \nabla_\theta f_{\theta}(x)] -  \EE_{p_{\theta}(x,y)}[y^\top \nabla_\theta f_{\theta}(x)].
\end{align*}

  \end{minipage}}
  
\end{theorem}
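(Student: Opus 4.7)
The plan is to reduce the claim to two applications of the standard log-partition-gradient identity, one conditional and one joint. First I would write the marginal EBM distribution explicitly as
\[
p_\theta(x) \;=\; \sum_{y\in\mathcal Y} p_\theta(x,y) \;=\; \frac{Z_x(\theta)}{Z(\theta)}, \qquad Z_x(\theta) = \sum_{y} \exp\!\bigl(y^\top f_\theta(x)\bigr),
\]
where $Z(\theta)=\sum_y\int \exp(y^\top f_\theta(x))\,dx$ is the joint normalizer already introduced in Section~\ref{sec:ebm}. Taking logarithms gives the clean decomposition $\log p_\theta(x) = \log Z_x(\theta) - \log Z(\theta)$, which isolates the two partition functions that we know how to differentiate.

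Next I would differentiate each log-partition term using the usual "score equals expected energy gradient" identity. Assuming standard regularity so that gradient and sum/integral commute, a direct calculation gives
\[
\nabla_\theta \log Z_x(\theta) \;=\; \frac{1}{Z_x(\theta)}\sum_{y}\exp\!\bigl(y^\top f_\theta(x)\bigr)\,y^\top\nabla_\theta f_\theta(x) \;=\; \mathbb{E}_{p_\theta(y\mid x)}\!\bigl[y^\top \nabla_\theta f_\theta(x)\bigr],
\]
because the normalized ratio $\exp(y^\top f_\theta(x))/Z_x(\theta)$ is exactly the conditional $p_\theta(y\mid x)$ used in the paper's softmax model. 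The same manipulation applied to $Z(\theta)$ yields $\nabla_\theta \log Z(\theta) = \mathbb{E}_{p_\theta(x,y)}[y^\top \nabla_\theta f_\theta(x)]$, since now the normalized ratio is the full joint $p_\theta(x,y)$.

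Subtracting the two expressions reproduces the claimed identity. Unbiasedness then follows tautologically: both terms are already written as exact expectations, so any Monte Carlo samples of $y\sim p_\theta(y\mid x)$ (trivial to draw from the softmax) and $(x,y)\sim p_\theta(x,y)$ (drawn via the Gibbs--Langevin procedure of Algorithm~\ref{alg:langevin}) produce an unbiased estimator by linearity of expectation. The only real care point, and the closest thing to an obstacle, is justifying the interchange of $\nabla_\theta$ with the infinite-domain integral inside $Z(\theta)$; this is where one would invoke dominated convergence under the same tail assumptions on $f_\theta$ implicitly used for the EBM to be well-defined in~\eqref{eq:CD}, and beyond that the argument is just bookkeeping.
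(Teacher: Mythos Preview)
Your argument is correct. It is also a bit more direct than the paper's own proof, so it is worth a short comparison. The paper does not write $p_\theta(x)=Z_x(\theta)/Z(\theta)$ immediately; instead it first invokes the variational identity
\[
\log p_\theta(x)\;\ge\;\EE_{q(y\mid x)}\Bigl[\log\frac{p_\theta(x,y)}{q(y\mid x)}\Bigr],
\]
observes that equality holds at $q^*(y\mid x)=p_\theta(y\mid x)$, and then expands $\log p_\theta(x)=\EE_{p_\theta(y\mid x)}[\log p_\theta(x,y)-\log p_\theta(y\mid x)]$. After substituting the EBM forms this collapses to $\EE_{p_\theta(y\mid x)}[\log Z_x(\theta)-\log Z(\theta)]$, which is exactly your decomposition wrapped in a vacuous outer expectation over $y$; the paper carries that expectation through all the differentiation and only drops it at the very end. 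So the two proofs perform literally the same gradient computations on $\log Z_x(\theta)$ and $\log Z(\theta)$; you simply skip the ELBO detour and the redundant $\EE_{p_\theta(y\mid x)}[\cdot]$ wrapper. What the paper's route buys is a variational-inference motivation for why the conditional $p_\theta(y\mid x)$ appears; what your route buys is brevity and a transparent ``difference of two log-partition gradients'' structure that makes the unbiasedness remark immediate. Your note about dominated convergence for interchanging $\nabla_\theta$ with the integral in $Z(\theta)$ is a fair caveat that the paper leaves implicit.
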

\begin{proof}
The proof is postponed to Appendix~\ref{appendix:A}.
\end{proof}
Based on this theorem, we could obtain the derivative of the objective in eq \eqref{eq:totalObj} by using eq \eqref{eq:CD} and Theorem \ref{thm:gradoflogp}, and we summarize it in the following theorem. 
\begin{theorem}
\label{thm:gradient_est}
The estimation of gradient of loss used in training the proposed method eq \eqref{eq:totalObj} is given by
\begin{align*}
      \nabla_\theta L(\theta; p_\theta) \nonumber &\triangleq  
     \frac{1}{\lambda}\nabla_{\theta}KL(q_{t}(\theta | D_{1:t})  \| q_{t-1}(\theta | D_{1:t-1})) \\ 
&  \nonumber - \nabla_{\theta} \log p_\theta(y | x)  \\  &  \nonumber +\frac{1}{\lambda}(\EE_{p_{\theta}(x,y)} \big[ y^\top \nabla_\theta f_{\theta}(x) \big] - \EE_{D_t}[{y}^\top  \nabla_\theta f_{\theta}(x)]). 
\end{align*}
\end{theorem}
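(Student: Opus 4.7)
The plan is to differentiate the objective in eq.~\eqref{eq:totalObj} term by term with respect to $\theta$ and then substitute the three unbiased gradient estimators already established in the paper. I would split the inner integrand into four pieces: the KL term, the discriminative cross-entropy $-\lambda \log p_\theta(y|x)$, the joint log-likelihood $-(1-\lambda)\log p_\theta(y,x)$, and the marginal $-\lambda \log p_\theta(x)$. Because the outer expectation is taken over $D_t$ and $\theta \sim q_t$ (the latter via the reparameterization trick of \citet{kingma2014stochastic}), the gradient commutes with the expectation, and per-sample data statistics are absorbed into $\EE_{D_t}[\,\cdot\,]$ at the end.

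Next I would substitute the three key identities. For the conditional term, direct differentiation of the softmax gives $\nabla_\theta \log p_\theta(y|x) = y^\top \nabla_\theta f_\theta(x) - \EE_{p_\theta(y|x)}[y^\top \nabla_\theta f_\theta(x)]$, whose inner expectation is tractable since $\mathcal{Y}$ is finite. For the joint term, eq.~\eqref{eq:CD} yields $\EE_{D_t}[\nabla_\theta \log p_\theta(x,y)] = \EE_{D_t}[y^\top \nabla_\theta f_\theta(x)] - \EE_{p_\theta(x,y)}[y^\top \nabla_\theta f_\theta(x)]$, with the Gibbs--Langevin procedure of Algorithm~\ref{alg:langevin} supplying the model expectation. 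For the marginal, Theorem~\ref{thm:gradoflogp} furnishes $\nabla_\theta \log p_\theta(x) = \EE_{p_\theta(y|x)}[y^\top \nabla_\theta f_\theta(x)] - \EE_{p_\theta(x,y)}[y^\top \nabla_\theta f_\theta(x)]$. Summing the four contributions and exploiting the factorization $\log p_\theta(x,y) = \log p_\theta(y|x) + \log p_\theta(x)$ should let the $\EE_{p_\theta(y|x)}$ piece from the marginal-term gradient cancel against its counterpart buried inside the joint-term gradient, leaving only one $\EE_{D_t}$ and one $\EE_{p_\theta(x,y)}$ contribution on the generative side. A final rescaling by $1/\lambda$, chosen so that the discriminative term appears with unit coefficient, then reproduces the stated estimator.

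The step I expect to be the main obstacle is precisely this combine-and-cancel bookkeeping: the $(1-\lambda)$ and $\lambda$ weights must interact just right so that all $\EE_{p_\theta(y|x)}$ traces vanish and, after division by $\lambda$, the coefficients on the KL term and on the data--model contrast $\EE_{p_\theta(x,y)}[\cdot] - \EE_{D_t}[\cdot]$ both come out to $1/\lambda$ while the discriminative piece $-\nabla_\theta \log p_\theta(y|x)$ comes out with unit weight. Once the algebra resolves, unbiasedness of the composite estimator follows by linearity of expectation, since each substituted estimator (eq.~\eqref{eq:CD} and Theorem~\ref{thm:gradoflogp}) is already unbiased and the Langevin samples from $p_\theta(x,y)$ give unbiased Monte Carlo realizations of the required model expectations.
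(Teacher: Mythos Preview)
Your overall plan---differentiate term by term, plug in eq.~\eqref{eq:CD} for the joint piece and Theorem~\ref{thm:gradoflogp} for the marginal piece, then rescale---matches the paper's route. The gap is in the ``combine-and-cancel'' step. There is no $\EE_{p_\theta(y|x)}[\,\cdot\,]$ term hidden in the CD joint gradient: eq.~\eqref{eq:CD} reads $\EE_{D}[y^\top\nabla_\theta f_\theta(x)]-\EE_{p_\theta(x,y)}[y^\top\nabla_\theta f_\theta(x)]$ with no conditional expectation, so the $\EE_{p_\theta(y|x)}$ piece coming from Theorem~\ref{thm:gradoflogp} has nothing to cancel against. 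If instead you use the factorization $\log p_\theta(x,y)=\log p_\theta(y|x)+\log p_\theta(x)$ on the joint term, all $\lambda$'s collapse (the objective is literally $-\log p_\theta(x,y)+KL$) and you lose the separate $-\nabla_\theta\log p_\theta(y|x)$ summand that the theorem displays. Either way the algebra does not close.

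What actually happens in the paper is that after substitution one obtains
\[
\nabla_\theta KL-\lambda\,\nabla_\theta\log p_\theta(y|x)+\EE_{p_\theta(x,y)}[y^\top\nabla_\theta f_\theta(x)]-\EE_{x_b\sim D_t}\EE_{y_b\sim\lambda p_\theta(y|x_b)+(1-\lambda)D_t}[y_b^\top\nabla_\theta f_\theta(x_b)],
\]
i.e.\ the residual $\lambda\,\EE_{p_\theta(y|x)}$ term is absorbed into a \emph{mixture} over $y_b$. The stated estimator is then obtained not by cancellation but by an explicit simplification: the mixture is replaced by drawing $y_b$ from $D_t$ only, and the constants are repackaged (division by $\lambda$ in the main text, a free hyperparameter $\gamma$ in the appendix). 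So the formula in the theorem is a deliberate practical relaxation, not an exact identity, and your expectation that ``the $(1-\lambda)$ and $\lambda$ weights must interact just right'' will not be met; you need to invoke the same approximation step rather than look for an algebraic cancellation.
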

\begin{proof}
The proof is postponed to Appendix~\ref{appendix:B}.
\end{proof}

The overall illustration of losses used in this work is summarized in Figure~\ref{fig:ourmethod}. The first term of the gradient estimation corresponds to the weighted KL-divergence between posterior approximation of task $t$ and $t-1$. The second term is the common NLL loss used in training deep neural networks. The calculation of these two terms corresponds to the gradient of forward neural network computation, and thus it could be obtained by back-propagation of underlying model $f_\theta$. The rest two terms correspond to the weighted generative capability. In this paper, we treat the generative term as a regularization term. $\lambda$ represents the importance balance of the generative regularization and discriminative NLL loss. We named the proposed method \textbf{Bayesian Generative Regularization}, and the  overall algorithm is summarized in Algorithm~\ref{alg:overall}.

\section{Experiments}
\subsection{Datasets}
We evaluate the proposed method on four datasets.

\noindent \textbf{Permuted-MNIST} Permuted-MNIST is a very popular benchmark dataset in the continual learning literature. The dataset received at each time step $D_t$ consists of labeled MNIST images whose pixels have undergone a fixed random permutation.

\noindent \textbf{Split-MNIST} This experiment was used by \cite{Zenke:2017:CLT:3305890.3306093}. Five binary classification tasks from the MNIST dataset arrive in sequence: 0/1, 2/3, 4/5, 6/7, and 8/9.


\noindent \textbf{Fashion-MNIST} Fashion-MNIST \cite{xiao2017/online}, similar to MNIST dataset, 
consists of a training set of 60,000 examples and a test set of 10,000 examples. Each example is a 28 x 28 grayscale image, associated with a label from 10 classes. This dataset represents more realistic features of real-world images and thus becomes an increasingly popular benchmark. For this task, we follow the Split-MNIST setup to split the classes into sequence: 0/1 (T-shirt/Trouser), 2/3 (Pullover/Dress), 4/5 (Coat/Sandal), 6/7 (Shirt/Sneaker), and 8/9 (Bag/Ankle boot).

\noindent \textbf{CUB} To further validate the proposed method could work on real-world color images, we perform experiments on Caltech-UCSD Birds (CUB) dataset. CUB is an image dataset with photos of 200 bird species. We select top 100 classes with more training images and then split thsese 100 classes into 10 continual learning tasks randomly. Each task consists of 5 binary classification in order. Detailed processing of the dataset is described in the supplementary.

\subsection{Baseline Methods}

We compare our method to the  following baseline methods.

\begin{itemize}[leftmargin=*]
  \item SGD: simply trains each task in an incremental setup without any regularization. It serves as the bottom line of all the methods. 
  \item All-data: trains the tasks jointly assuming all datasets are available. At each step, a random dataset is sampled and then a batch of data is sampled from the dataset. It serves as the upper bound and indicates the difficulty of the classification task.
  \item EWC \cite{kirkpatrick2017overcoming}: builds the importance estimation on top of diagonal Laplace propagation by calculating the diagonal of empirical Fisher information.
   \item VCL \cite{v.2018variational,swaroop2019improving}: conducts variational inference from Bayesian point of view of continual learning. VCL is reported as the most competitive method under our problem setup. In particular, we implement the improved version of VCL \cite{swaroop2019improving}.
      \item VGR \cite{farquhar2019unifying}: extends VCL by augmenting a GAN generative model to record the replay data.

\end{itemize}

Detailed processing of the dataset, implementation of the baseline methods and hyperparameters of the proposed method are described in the supplementary. 

\subsection{Results and Analysis}

The evaluation metric used is average classification accuracy over all observed tasks. We first summarize accuracy of each method after observing all tasks in Table~\ref{tab:Prelim}. Our proposed method is named \textbf{Bayesian Generative Regularization (BGR)}. We notice that ``All-data'' achieves high accuracy for almost all datasets. Accuracy on CUB drops a bit as there are certain species of birds which are difficult to classify it correctly. This shows that all classification tasks are not difficult when all data are provided. The challenges are indeed faced when continual learning setup comes in and causes forgetting.  
In Table~\ref{tab:Prelim}, we see that BGR outperforms baselines in all tasks. In particular, the improvement is significant on Fashion-MNIST and CUB dataset which contain more real-world alike objects. BGR increases about 15$\%$ accuracy in Fashion-MNIST and 10$\%$ in CUB datasets.

  \begin{table}[t]
    \begin{minipage}{.5\textwidth}
      \centering
    \begin{tabular}{|c|c|c|c|c|} \hline
          &  Permuted & Split & Fashion & CUB  \\\hline
         All-data & 99.3& 99.1&99.3&88.3 \\\hline
         SGD & 37 & 90 & 74.6 & 65.2  \\\hline
         EWC & 87.5 & 97.4 & 82.2 & 67.2  \\\hline
         VCL & 92.3 & \bf{98.2} & 76.9 & 67.4 \\ \hline
         VGR & 70.5 & 97.7 & 86.2 & 76.8 \\ \hline
         BGR & \bf{92.7} & \bf{98.2} & \bf{97.2} & \bf{78.8}\\ \hline
    \end{tabular}
        \caption{Summarization of overall performance on continual learning tasks. Results shown in the table are average classification accuracy (in $\%$) of each task.} 
    \label{tab:Prelim}
\end{minipage}
  \end{table}

\begin{figure}[t!]
\centering
    \includegraphics[width=.95\linewidth]{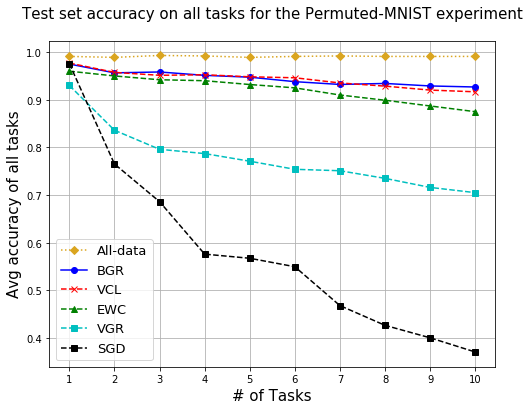}
    \caption{Detailed Classification Results of Permuted-MNIST. }
    \label{fig:permutedmnist}
\end{figure}

\begin{figure}[t!]
\centering
    \includegraphics[width=.9\linewidth]{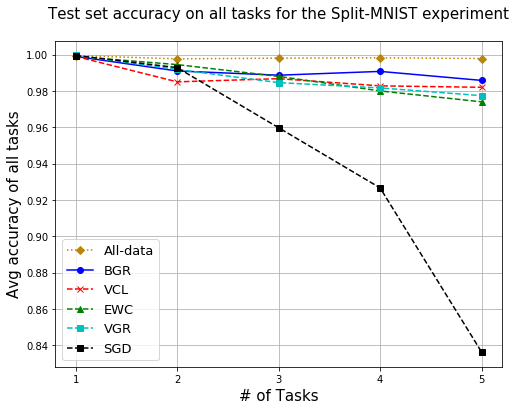}
    \caption{Detailed Classification Results of Split-MNIST. }
    \label{fig:splitmnist}
\end{figure}

In addition to accuracy after observing all tasks, we are also interested in individual performance after observing each new incoming task. Average classification accuracy of each time step of Permuted-MNIST and Split-MNIST are shown in Figure~\ref{fig:permutedmnist} and Figure~\ref{fig:splitmnist}. We can observe that despite the performance of SGD-only training drops abruptly, all other methods performs relatively steady over all time steps, and BGR stands out in the later time steps. For real-world objects as Fashion-MNIST and CUB, results are shown in Figure~\ref{fig:fashion} and Figure~\ref{fig:bird}. These two tasks contain more difficult classification tasks and thus it is more challenging when posed as continual learning setup. The difficulty of each task might be very different hence the accuracy fluctuates. Consequently, the curve will not be as smooth as previous two datasets. Nevertheless, we again observe that BGR has a relatively steady performance over the baseline methods.
\begin{figure}[t!]
\centering
\includegraphics[width=.95\linewidth]{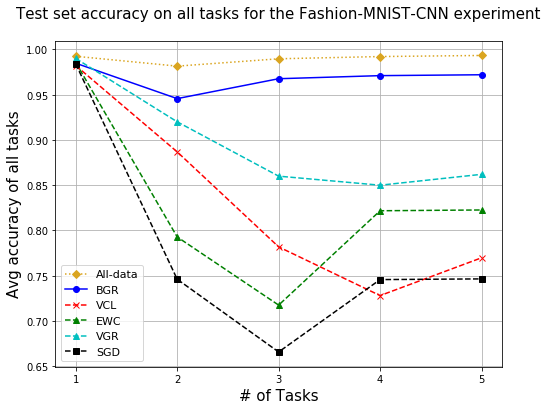}
    \caption{Detailed Classification Results of Fashion-MNIST. }
    \label{fig:fashion}
\end{figure}

\begin{figure}[t!]
\centering
    \includegraphics[width=.95\linewidth]{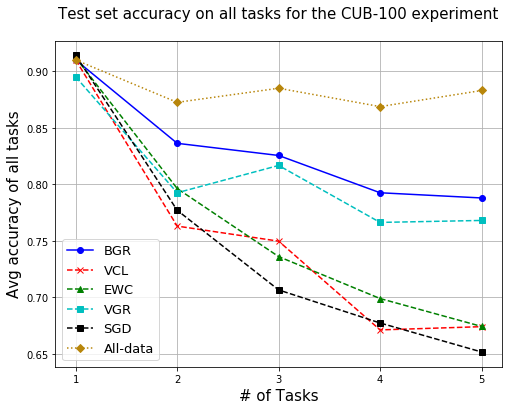}
    \caption{Detailed Classification Results of CUB. }
    \label{fig:bird}
\end{figure}
We also want to point out that since BGR contains generative capability, indeed we could sample images from the trained model. However, our main focus is not generative model but overcoming forgetting, so the generated images might not be realistic as the state-of-the-art generative models. The generative power used rather as a regularization to make the model robust to continual learning setup. We have attached some examples of generated images of MNIST and Fashion-MNISt dataset in the supplementary.

\subsection{Ablation Study}
Despite we have validated the performance of the proposed method, we are not sure if the gain comes from generative regularization, Bayesian method or indeed the better estimation of the posterior by combining two approaches. Therefore, we conduct ablation analysis on Fashion-MNIST and CUB to verify the importance of each component. Notice that BGR without the generative regularization would simply become the VCL method. To test the generative component without Bayesian framework, we will remove model parameter sampling procedure and KL-divergence term. This leads to normal training of the classifier with NLL loss and the generative loss from EBM. We denote this setup as GEN. We also try to apply the GEN with L2 regularization which resembles the KL divergence term in our formulation. We denote such method as GEN-L2. Results of all the methods are summarized in Table~\ref{tab:abalation}.

\begin{table}[t]
          \centering
    \begin{tabular}{|c|c|c|} \hline
          &  Fashion & CUB  \\\hline
         SGD & 74.6 & 65.2  \\\hline
         All-data & 99.3&88.3 \\\hline
         GEN &  87.9 & 74.0  \\\hline
         GEN-L2 & 90.9 & 72.8  \\\hline
         VCL & 76.9 & 67.4 \\ \hline
         BGR &  \bf{97.2} & \bf{78.8}\\ \hline
    \end{tabular}
    \caption{Ablation study of overall performance on Fashion-MNIST and CUB datasets. Results shown in the table are average classification accuracy (in $\%$) of each task.}
    \label{tab:abalation}
 \end{table}

From Table~\ref{tab:abalation}, we could observe that generative term itself is very important to overcome the catastrophic forgetting. Compared to the performance of VCL, GEN could achieve more than 5$\%$ performance gain on CUB, and more than 10$\%$ on Fashion-MNIST. Generative capability indeed provides a more robust model in continual learning setup and this validates our initial intuition that knowing the complete formulation of the object would make model perform better. On the other hand, results show that adding L2 regularization on top of generative term is not necessarily helpful. Even when it's effective, the performance gain is rather limited. 

However, we also notice that the generative term alone, without Bayesian updates, cannot reach a performance comparable to the proposed method. Furthermore, we could observe the synergy effect that the sum of the performance gain from VCL and GEN together could not reach the performance of BGR. This implies that in BGR, Bayesian framework and generative term are not working independently. Generative capability implicitly helps to capture the relationship between $\theta_{t}$ and $\theta_{S}$ better with more diverse feature. So when two approaches are combined, we could get an approximation of posterior  $p(\theta|D_{1:t})$ with more information on the object without introducing more model or data complexity.

\subsection{Comparison to Generative Model}
It is also important to compare BGR and VGR, which is a representative algorithm using generative models, in details since both are generative models. Results are shown in Figure \ref{fig:vgr}. The performance on CUB and Split-MNIST are similar. The biggest difference is that for VGR to work, it needs to use GAN with much larger parameters. BGR uses only 220k parameters on Split-MNIST and if we choose GAN in VGR with similar size, the perform drops to 88 (VGR-SMALL). Also notice that in VGR paper \cite{farquhar2019unifying}, authors continue to use classifier from previous task. If we re-initiate the classifier every time a new task comes in and train the model with replayed data, then the performance drops to 66 (VGR-GAN-ONLY). This again shows that generative capability works better when combined with discriminative classifier as in BGR. On the other hand, the benefit of using separate generative model as in VGR is that the training speed is fast, which BGR need to generate samples by Langevin dynamics. BGR takes 3097 seconds to finish and VGR only takes 564 seconds. 

\begin{figure}[t!]
\centering
    \includegraphics[width=1.06\linewidth]{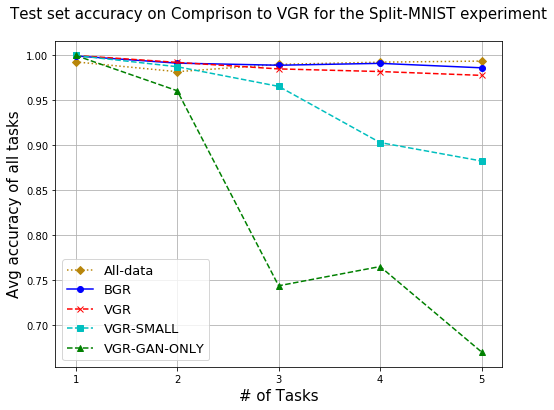}
    \caption{Detailed Comparison of VGR on Split-MNIST. }
    \label{fig:vgr}
\end{figure}

\subsection{Comparison to Memory-based Methods}
Our work focuses on memory-less setup to study how generative capability can empower the model to overcome catastrophic forgetting. However, many large-scale catastrophic forgetting problems are solved by memory-based solutions. Memory-based method allows the algorithm store some data from previous tasks and re-use the data to fine-tune the model in the later stages. It is thus hard to directly compare BGR with memory-based methods since the underlying assumptions are different. However, we provide some results on MNIST and Fashion-MNIST for the completeness, and this also gives reader a change to observe the trade-off between memory and memory-less methods. We run the code of GDumb method \cite{prabhu2020gdumb}\footnote{\url{https://github.com/drimpossible/GDumb}}, and found out that indeed it achieves better MNIST results (98.5) over us (98.2) with memory $k = 5000$. However, it does not perform equivalently well on FashionMNIST. It requires memory $k=7500$ to achieve similar performance (97.2) whereas BGR requires no extra storage.

\subsection{Extension to Class Incremental Learning Setup}

With the advance of catastrophic forgetting research, many other learning scenarios are introduced. Class incremental Learning \cite{prabhu2020gdumb,farquhar2019unifying}, a popular setup, consider a combination of single-head and multi-head setup. It is a single-head setup but each time we will only be given a subset of classes as in multi-head setup. In \cite{farquhar2019unifying}, authors pointed out that this task is challenging for methods without memory systems. Indeed, directly applying the proposed method cannot yield a good result. However, in this section we show that the proposed BGR can be combined with memory-based methods to achieve better results. Specifically, we consider a combination of BGR with an meta-learning method iTAML \cite{rajasegaran2020itaml}. Under the original learning scheme in iTAML, we additionally add the generative regularization term into the training objective, and observe that on split-MNIST dataset, the performance improves from 97.8 (iTAML) to 98.4 (iTAML+Ours), which shows that BGR can be used as a regularization on top of state-of-the-art memory-based methods. Further study of how the proposed BGR could be combined with other methods is an interesting future direction.

\subsection{Qualitative Analysis: Generative Capability Learns Diverse Features}

\begin{figure}[!t]
    \centering
    \begin{subfigure}[h]{0.75\linewidth}
        \includegraphics[width=\linewidth]{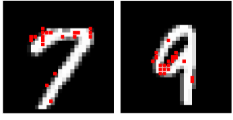}
        \caption{Top 20$\%$ salient points of models trained with SGD. Salient points concentrate on most discriminative part of the digits and model suffers from catastrophic forgetting. Accuracy drops from 99.7$\%$ to 54.2$\%$ after training on another task.}
        \label{fig:SGDpoint}
    \end{subfigure}~\\
  
    \begin{subfigure}[h]{.75\linewidth}
        \includegraphics[width=\linewidth]{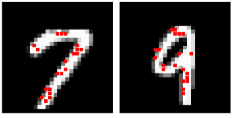}
        \caption{Top 20$\%$ salient points of models trained with the proposed method. EBM provides a generative capability so the salient points scatter equally over the whole stroke of digits. Accuracy drops only from 99.7$\%$ to 95.0$\%$ after training on another task. This shows the importance of generative term in overcoming catastrophic forgetting.  }
        \label{fig:OurPoint}
    \end{subfigure}
    \caption{Illustration of importance of learning diverse features by proposed generative term in the model. }
    \label{fig:intuition}
\end{figure}

Since we add the generative regularization in the objective function, we are interested in whether the learned features are indeed sufficient with generative ability.
In order to generate an object, generative model should not only recognize certain parts of the object but also capture most variations of it. Thus, we hypothesize that generative capability learn a more diverse feature instead of concentrated discriminative features. Such holistic feature capture should prevent model from focusing on only part of object and alleviate the catastrophic forgetting. To get a qualitative assessment, we performed the integrated gradients~\cite{sundararajan2017axiomatic} to investigate which pixels of the image contribute most to the output of the model\footnote{\url{https://github.com/chihkuanyeh/saliency_evaluation}}. The salient points are marked with red dots in~Figure~\ref{fig:intuition}. These points represent the part of the object wit strongest response to the feature extraction process (i.e., activations in the neural network).

As shown in Figure~\ref{fig:SGDpoint}, instead of understanding full stroke of the drawing, training the pair discriminatively with NLL loss makes the model focusing on certain part of the underlying object. Specifically, salient points of the digit 7 are spread mostly on top horizontal stroke, and salient points of the digit 9 centered on lower left curved stroke. Admittedly, these salient points mark the most critical difference between the shape of these two digits so discriminative models can exploit such informative feature to succeed in classification task. However, since not many features are extracted in the first task, when the model moves to the next task, the discriminative model might focus on a very different set of features such that minor adjustment of model parameters will cause the desired feature extraction in task 1 lost. On the other hand, as shown in the in Figure ~\ref{fig:OurPoint}, salient points of BGR with generative capability will be be equally distributed to different parts of the stroke. And the accuracy drops only from 99.7$\%$ to 95.0$\%$ after training on another task. This shows the importance of generative term in overcoming catastrophic forgetting. 

\section{Conclusions}
In this paper, we use carefully analyze the drawback of VCL caused by the mean-field approximation and introduce sufficient generative condition to revalid the factorization to overcome the catastrophic forgetting. To implement the generative ability of partial neural network, we solicit the energy-based models view of neural network. 
Extensive experimental results show that when generative capability  combines with Bayesian inference framework, it can alleviate catastrophic forgetting \emph{significantly} without modifying underlying model architecture. The proposed BGR outperforms state-of-the-art method on Fashion-MNIST dataset about $15\%$ accuracy and CUB dataset about $10\%$.

\subsection*{Acknowledgement}

Part of this work was done during PHC’s internship at Google. CJH and PHC are partially supported by NSF under IIS-1901527, IIS-2008173 and IIS-2048280. 

\bibliography{example_paper}
\bibliographystyle{icml2021}

\clearpage
\appendix

\onecolumn

\section{Proof of Theorem~\ref{thm:gradoflogp}}\label{appendix:A}

\def\thetheorem{\ref{thm:gradoflogp}}
\begin{theorem}

Given a discriminative model $f_{\theta}(x)$, the unbiased gradient estimator of corresponding Energy-based model $\log p_{\theta}(x)$  is given by

\begin{align*}
      \EE_{p_{\theta}(y|x)}[y^\top \nabla_\theta f_{\theta}(x)] -  \EE_{p_{\theta}(x,y)}[y^\top \nabla_\theta f_{\theta}(x)].
\end{align*}
\end{theorem}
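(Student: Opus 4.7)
The plan is to unfold the marginal $\log p_\theta(x)$ into the difference of a log-sum-exp over labels and the log-partition function, and then differentiate each piece separately. Starting from the joint EBM $p_\theta(x,y) = \exp(y^\top f_\theta(x))/Z(\theta)$ with $Z(\theta) = \sum_y \int \exp(y^\top f_\theta(x))\, dx$, marginalising out $y$ gives
\[
\log p_\theta(x) = \log \sum_{y \in \mathcal{Y}} \exp(y^\top f_\theta(x)) \;-\; \log Z(\theta).
\]
The whole proof then reduces to identifying each of the two gradient terms on the right as one of the expectations appearing in the claim.

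For the first term, I would apply the chain rule to the log-sum-exp. The gradient is
\[
\frac{\sum_y \exp(y^\top f_\theta(x))\, y^\top \nabla_\theta f_\theta(x)}{\sum_{y'} \exp(y'^\top f_\theta(x))},
\]
and I would then observe that the weights are exactly the conditional $p_\theta(y|x) = p_\theta(x,y)/p_\theta(x)$, which after the $Z(\theta)$ factors cancel is the softmax over labels. Hence this term equals $\mathbb{E}_{p_\theta(y|x)}[y^\top \nabla_\theta f_\theta(x)]$.

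For the second term, I would use the standard identity $\nabla_\theta \log Z(\theta) = \mathbb{E}_{p_\theta(x,y)}[\nabla_\theta(y^\top f_\theta(x))]$. Concretely, differentiating under the integral,
\[
\nabla_\theta \log Z(\theta) = \frac{1}{Z(\theta)} \sum_y \int \exp(y^\top f_\theta(x))\, y^\top \nabla_\theta f_\theta(x)\, dx,
\]
which is $\mathbb{E}_{p_\theta(x,y)}[y^\top \nabla_\theta f_\theta(x)]$ by definition of $p_\theta(x,y)$. Subtracting yields the stated formula.

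The main obstacle is not algebraic but regulatory: one has to justify swapping $\nabla_\theta$ with the sum over $\mathcal{Y}$ (finite, hence trivial) and with the integral over $x$ in $Z(\theta)$, which requires integrability of $\exp(y^\top f_\theta(x))\,\|y^\top \nabla_\theta f_\theta(x)\|$ and a dominated-convergence argument. Under the mild smoothness/integrability assumptions standard for EBMs, this is routine, so I would simply note it and proceed. The ``unbiasedness'' claim then follows because both expectations can be replaced by sample averages (with $y$ drawn from $p_\theta(\cdot|x)$ and $(x,y)$ drawn from the model via Langevin/MCMC, as in Algorithm~\ref{alg:langevin}) without introducing any bias beyond that of the sampler itself.
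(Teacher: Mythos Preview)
Your proposal is correct and follows the same core computation as the paper: both reduce $\log p_\theta(x)$ to $\log\sum_y\exp(y^\top f_\theta(x)) - \log Z(\theta)$ and then differentiate each piece to obtain the two expectations. The only difference is cosmetic---the paper first reaches that expression via an ELBO argument (taking $q(y|x)=p_\theta(y|x)$ so the bound is tight), whereas you get there directly by marginalising $y$; your route is the more economical of the two.
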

\addtocounter{theorem}{-5}
\begin{proof}
Notice that we could derive ELBO of $\log p_{\theta}(x)$ as:
    \begin{align}
    \log p_{\theta}(x) \geq \EE_{q(y|x)}\big[\log\frac{p_{\theta}(x,y)}{q(y|x)}\big] ,
\end{align}
and we know the maximal would be obtained when $KL(q(y|x)\|p_{\theta}(y|x)) = 0$, which implies that optimal $q^{*}(y|x)$ is $p_{\theta}(y|x)$. Thus, we will have 

\resizebox{1\linewidth}{!}{
  \begin{minipage}{\linewidth}
\begin{align}
    \log p_{\theta}(x) &= \EE_{p_{\theta}(y|x)}\big[\log \frac{p_{\theta}(x,y)}{p_{\theta}(y|x)}\big] \\
    &= \EE_{p_{\theta}(y|x)}\big[\log p_{\theta}(x,y) - \log p_{\theta}(y|x)\big] \\
    &= \EE_{p_{\theta}(y|x)}\big[\log \frac{\exp(y^\top f_{\theta}(x))}{\int_{x}\sum_{y}\exp(y^\top f_{\theta}(x))} - \log \frac{\exp(y^\top f_{\theta}(x))}{\sum_{y} \exp(y^\top f_{\theta}(x))}\big] \\
    &= \EE_{p_{\theta}(y|x)}\big[
    \log \sum_{y} \exp(y^\top f_{\theta}(x)) - \log \int_{x}\sum_{y}\exp(y^\top f_{\theta}(x))\big]. \label{eq:logx}
\end{align}
  \end{minipage}
}

Thus, we could obtain $\nabla_{\theta} \log p_{\theta}(x)$ by taking derivative of eq~\eqref{eq:logx}:
 
 \resizebox{1\linewidth}{!}{
  \begin{minipage}{\linewidth}
\begin{align}
    &\nabla_{\theta} \log p_{\theta}(x) \\
    &= \EE_{p_{\theta}(y|x)}\big[ \nabla_{\theta} \big[
    \log \sum_{y} \exp(y^\top f_{\theta}(x)) \big] - \nabla_{\theta} \big[\log \int_{x}\sum_{y}\exp(y^\top f_{\theta}(x)) ] \big]  \\
 &=  \EE_{p_{\theta}(y|x)}\big[
 \frac{\nabla_{\theta}(\sum_{y} \exp(y^\top f_{\theta}(x))}{  \sum_{y} \exp(y^\top f_{\theta}(x))} -  \frac{\nabla_{\theta}(\int_{x}\sum_{y}\exp(y^\top f_{\theta}(x))}{\int_{x}\sum_{y}\exp(y^\top f_{\theta}(x)) }\big] \\
 &=  \EE_{p_{\theta}(y|x)}\big[
 \frac{\sum_{y} \nabla_{\theta}(\exp(y^\top f_{\theta}(x))}{  \sum_{y} \exp(y^\top f_{\theta}(x))} -  \frac{\int_{x}\sum_{y}\nabla_{\theta}(\exp(y^\top f_{\theta}(x))}{\int_{x}\sum_{y}\exp(y^\top f_{\theta}(x)) }\big] \\
  &=  \EE_{p_{\theta}(y|x)}\big[
 \frac{\sum_{y}\exp(y^\top f_{\theta}(x)
\nabla_{\theta}(\log \exp(y^\top f_{\theta}(x))}{  \sum_{y} \exp(y^\top f_{\theta}(x))
 } -  \frac{\int_{x}\sum_{y}\nabla_{\theta}(\exp(y^\top f_{\theta}(x))}{\int_{x}\sum_{y}\exp(y^\top f_{\theta}(x)) }\big] \\
   &=  \EE_{p_{\theta}(y|x)}\big[
   \sum_{y}p_{\theta}(y|x)y^\top  \nabla_\theta f_{\theta}(x)
 -  \frac{\int_{x}\sum_{y}\exp(y^\top f_{\theta}(x)\nabla_{\theta}(\log \exp(y^\top f_{\theta}(x))}{\int_{x}\sum_{y}\exp(y^\top f_{\theta}(x)) }\big] \\
    &=  \EE_{p_{\theta}(y|x)}\big[
   \sum_{y}p_{\theta}(y|x)y^\top  \nabla_\theta f_{\theta}(x)
 -  \int_{x}\sum_{y}p_{\theta}(x,y)y^\top  \nabla_\theta f_{\theta}(x) \big] \\
 &=\EE_{p_{\theta}(y|x)}\big[
 \EE_{p_{\theta}(y|x)}[y^\top \nabla_\theta f_{\theta}(x)] -  \EE_{p_{\theta}(x,y)}[y^\top \nabla_\theta f_{\theta}(x)]\big] \\
  &=\EE_{p_{\theta}(y|x)}[y^\top \nabla_\theta f_{\theta}(x)] -  \EE_{p_{\theta}(x,y)}[y^\top \nabla_\theta f_{\theta}(x)].
 \end{align}

  \end{minipage}
}

Notice that the outer expectation could be taken off since after inner expectation, there won't be any randomness on $y$.
\end{proof}

\section{Proof of Theorem~\ref{thm:gradient_est}}\label{appendix:B}
\def\thetheorem{\ref{thm:gradient_est}}
\textbf{Theorem 3}
The estimation of gradient of loss used in training the proposed method eq \eqref{eq:totalObj} is given by
\begin{align*}
     & \nabla_\theta L(\theta; p_\theta) \nonumber \triangleq  
     \nabla_{\theta}KL(q_{t}(\theta | D_{1:t})  \| q_{t-1}(\theta | D_{1:t-1})) \\ 
&  \nonumber -\nabla_{\theta} \log p_\theta(y | x)   +\gamma(\EE_{p_{\theta}(x,y)} \big[ y^\top \nabla_\theta f_{\theta}(x) \big] - \EE_{D_t}[{y}^\top  \nabla_\theta f_{\theta}(x)]). 
\end{align*}

\begin{proof} 
To solve the objetive function:
\begin{equation}
        \min_{q_{t} \in \bQ} \EE_{q_t,D_t} \big[ -(1 - \lambda) \log p_\theta(y, x) -\lambda \log p_\theta(y | x) 
        - \lambda\log p_{\theta}(x)+ KL\big[q_{t}(\theta | D_{1:t})  \| q_{t-1}(\theta | D_{1:t-1}) \big], \nonumber
\end{equation}

we need to take derivative over the above equation and calculate the unbiased gradient estimator of each term. Notice that the first term is provided in \eqref{eq:CD} and the second term is provided in Theorem \ref{thm:gradoflogp}. Thus, we could substitute these values into the equation to get:
\begin{flalign}
 \nabla_\theta L(\theta; p_\theta) \nonumber 
&\triangleq  - (1-\lambda) \nabla_{\theta}\log p_\theta(y, x) -  \lambda \nabla_{\theta} \log p_\theta(y | x) \nonumber 
     - \lambda \nabla_{\theta}\log p_{\theta}(x)+ \\ & \nabla_{\theta}KL(q_{t}(\theta | D_{1:t})  \| q_{t-1}(\theta | D_{1:t-1})) \nonumber\\ 
        =& \nonumber (1-\lambda)(\EE_{p_{\theta}(x,y)} \big[ y^\top \nabla_\theta f_{\theta}(x) \big] -  \EE_D \big[ y^\top \nabla_\theta f_{\theta}(x) \big] )\\\nonumber
      &- \lambda \nabla_{\theta} \log p_\theta(y | x)  + \nabla_{\theta}KL(q_{t}(\theta | D_{1:t})  \| q_{t-1}(\theta | D_{1:t-1}))\\\nonumber
      &+ \lambda \EE_{p_{\theta}(x,y)}[y^\top \nabla_\theta f_{\theta}(x)])
      - \lambda
      (\EE_{p_{\theta}(y|x)}[y^\top \nabla_\theta f_{\theta}(x)] \\
       =& \nonumber  \nabla_{\theta}KL(q_{t}(\theta | D_{1:t})  \| q_{t-1}(\theta | D_{1:t-1}))
  \nonumber -\lambda \nabla_{\theta} \log p_\theta(y | x)   +\EE_{p_{\theta}(x,y)} \big[ y^\top \nabla_\theta f_{\theta}(x) \big]  \\\nonumber
     &- \EE_{x_b\sim D_t}\EE_{y_b\sim \lambda p_\theta(y|x_b) + (1 - \lambda)D_t}[{y_b}^\top  \nabla_\theta f_{\theta}(x_b)] , 
\end{flalign}

where $x_b$ is the training instance sampled from true data distribution $D_{t}$ with $y_b$ sampled from a mixture of conditional $p_\theta(y|x_b)$ and training sets. Again, to generate the samples $(x_t, y_t)$ from the current model, we exploit the hybrid Monte-Carlo \cite{neal2011mcmc}, specifically the Langevin dynamics sampler, as listed in Algorithm~\ref{alg:langevin}.

In this work, we treat the generative term as a regularization to alleviate catastrophic forgetting in discriminative task. Therefore, we relax the constant $\lambda$ and introduce a new hyperparameter $\gamma$ to represent the importance of the generative regularization. For simplicity, we only draw samples from true data distribution instead of a mixture of conditional $p_\theta(y|x_b)$ and training sets. This leads to the final estimation of gradient of loss used in training the proposed method:
\def\thetheorem{\ref{thm:gradoflogp}}

\resizebox{1.05\linewidth}{!}{
  \begin{minipage}{1\linewidth}
\begin{align*}
     & \nabla_\theta L(\theta; p_\theta) \nonumber \triangleq  
     \nabla_{\theta}KL(q_{t}(\theta | D_{1:t})  \| q_{t-1}(\theta | D_{1:t-1})) \\ 
&  \nonumber -\nabla_{\theta} \log p_\theta(y | x)   +\gamma(\EE_{p_{\theta}(x,y)} \big[ y^\top \nabla_\theta f_{\theta}(x) \big] - \EE_{D_t}[{y}^\top  \nabla_\theta f_{\theta}(x)]). \nonumber\\
\end{align*}
  \end{minipage}}
\end{proof}

\section{Examples of Generated Images}

We show samples of the generated MNIST digit in Figure~\ref{fig:genmnist} and samples of the generated Fashion-MNIST in Fiagure~\ref{fig:genfashion}. These images are generated by using Multi-layer Perceptron model (MLP) with 2 hidden layers and each layer has dimension 256 with ReLU activation function. Notice that our main task is to overcome catastrophic forgetting but not image generation. Generative capability is just used as a regularization term so the images generated are not perfectly following the true data distribution. In addition, we are using a rather small model to build EBM. In practice, people reported to use much larger networks (about 20 times more parmeters) in order to generate more clear images for CIFAR-10 dataset~\cite{du2019implicit}. 

\begin{figure}[h]
    \centering
    \includegraphics[width=.7\linewidth]{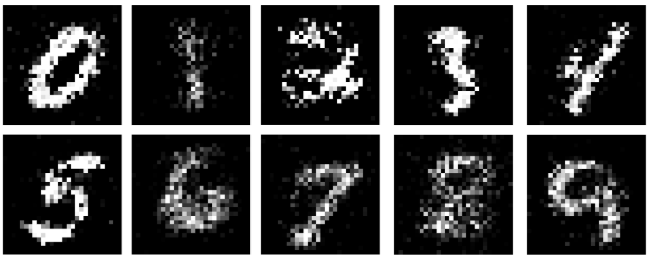}
    \caption{Examples of generated MNIST images. The first row shows digits 0 to 4 and the second row shows digits 5 to 9. }
    \label{fig:genmnist}
\end{figure}

\begin{figure}[h]
    \centering
    \includegraphics[width=.7\linewidth]{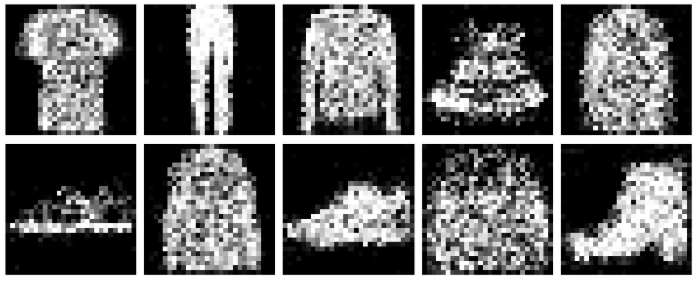}
    \caption{Examples of generated Fashion-MNIST images. The first row shows digits 0 to 4 and the second row shows digits 5 to 9. The first row corresponds to objects T-shirt, Trouser, Pullover, Dress and Coat. The second row corresponds to Sandal, Shirt, Sneaker, Bag and Ankle boot.
}
    \label{fig:genfashion}
\end{figure}
\section{Preprocess of Data}
For all the dataset, we normalized the pixel values in range [0,1]. For MNIST and Fashion-MNIST , we have the train, validation and test splits provided within the dataset. Images in CUB dataset is rather limited. Most classes have samples less than 100. Therefore, we select only the top 100 classes with more images and randomly pick 40 to form the train set and 10 to be validation set. The rest (non-fixed number) of the remaining images will be left as test set. In addition, CUB dataset provided foreground and background segmentation. We segment only the foreground bird images and left the background to be black. Without this, EBM will try to generate background istead and this will not benefit to overcoming forgetting.
\section{Implementation Details}
For each dataset/task, we compare these methods under the same network architecture. 
As illustrated in the related work, we basically do not compare memory-based and model adaption methods as the data and model complexity will increase, but we include generative-based model VGR to compare. The implementation of VGR is primarily based on these two repositories with adjustment toward our setup\footnote{https://github.com/nbro/Continual-learning-1,https://github.com/GMvandeVen/continual-learning}.  For EWC and VCL,  we follow the released open source implementation \footnote{https://github.com/nvcuong/variational-continual-learning}. The chosen baseline methods represent the state-of-the-art algorithms to overcome forgetting without changing model or adding data. For Permuted-MNIST and Split-MNIST, we use a Multi-layer Perceptron model (MLP) with 2 hidden layers and each layer has dimension 256. ReLU is used as the activation function. For Permuted-MNIST,  we use single-head model and for Split-MNIST we use multi-head model. For Fashion-MNIST dataset, we evaluate the results on Convolutaional Neural Networks (CNN) with 4 layers of convolutional layer (32,1), (64,32), (64,64), (64,64) followed by one layer of fully connected layer. For CUB dataset, we apply a Wide-Residual Network  \cite{zagoruyko2016wide} implemented with depth 16 and widen-factor 2. The implementation could be found on the official Pytorch repository\footnote{https://github.com/meliketoy/wide-resnet.pytorch}.  All the models are trained with an ADAM optimizer.

The hyperparameters used in the experiment are listed in the Table~\ref{tab:hyper}. In addition, after each step of SGLD sampling, we will clamp the sample within range [0,1] to make sure the generated image is within the range of true data distribution. Learning rate, Adam Beta, SGLD step size, SGLD noise follows previous implementation of SGLD sampling \footnote{https://github.com/rosinality/igebm-pytorch} or WRN model \footnote{https://github.com/kibok90/iccv2019-inc}.
Number of models sampled from the Bayesian posterior is mostly limited by time constraints. In general we found out 3 is enough but the more the better. For Epochs of each round of SGLD update, more steps is better, but more updates will also lead to very long training time. Thus, in practice we try some numbers from 10 to 100 steps on small portion of data. We will stop searching bigger numbers once the model could generate images look similar to data distribution.
Buffer reinitialization rate is determined from validation set. We search over .05, .2 and .5.  

\begin{table}[ht!]
    \centering
    \begin{tabular}{|c|c|c|c|c|} \hline
          &  Permuted & Split & Fashion & CUB  \\\hline
         Learning Rate & 1e-3 & 1e-3&1e-3&1e-4 \\\hline
         Adam Beta &  (0,0.999) & (0,0.999) & (0,0.999) & (.9, .999)  \\\hline
         Number of models sampled from $p(\theta)$ &  10 & 10 & 10 &3 \\\hline

         Generation Importance $\gamma$ &1 & 1 & 1 & .2 \\\hline

         Buffer Size &10000 & 10000 & 10000 & 200  \\\hline
         SGLD step size & 10 & 10 & 10 & 1 \\ \hline
         Buffer Reinitialization Rate & .05 & .5 & .05 & .05 \\ \hline
         SGLD noise & 5e-3 & 5e-3 & 5e-3 & 1e-2 \\ \hline
         Epochs of each round of SGLD update &  60 & 60 & 5 &20 \\\hline
    \end{tabular}
    \caption{Summarization of hyperparameters used in each task.}
    \label{tab:hyper}
\end{table}

\end{document}